\newtheorem{theorem}{Theorem}[section]
\newtheorem{proposition}[theorem]{Proposition}
\begin{document}
\title{Differential Evolution with Reversible Linear Transformations}

\author{Jakub M. Tomczak}
\affiliation{%
  \institution{Vrije Universiteit Amsterdam}
  \city{Amsterdam} 
  \country{the Netherlands}
}
\email{j.m.tomczak@vu.nl}

\author{Ewelina W\k{e}glarz-Tomczak}
\affiliation{%
  \institution{University of Amsterdam}
  \city{Amsterdam} 
  \country{the Netherlands}
}
\email{e.m.weglarz-tomczak@uva.nl}

\author{Agoston E. Eiben}
\affiliation{%
  \institution{Vrije Universiteit Amsterdam}
  \city{Amsterdam} 
  \country{the Netherlands}
}
\email{a.e.eiben@vu.nl}

\renewcommand{\shortauthors}{Tomczak et al.}

\begin{abstract}
Differential evolution (DE) is a well-known type of evolutionary algorithms (EA). Similarly to other EA variants it can suffer from small populations and loose diversity too quickly. This paper presents a new approach to mitigate this issue: We propose to generate new candidate solutions by utilizing reversible linear transformation applied to a triplet of solutions from the population. In other words, the population is enlarged by using newly generated individuals without evaluating their fitness. We assess our methods on three problems: (i) benchmark function optimization, (ii) discovering parameter values of the gene repressilator system, (iii) learning neural networks. The empirical results indicate that the proposed approach outperforms vanilla DE and a version of DE with applying differential mutation three times on all testbeds.
\end{abstract}

%
%
\begin{CCSXML}
<ccs2012>
   <concept>
       <concept_id>10003752.10003809.10003716.10011138.10011803</concept_id>
       <concept_desc>Theory of computation~Bio-inspired optimization</concept_desc>
       <concept_significance>500</concept_significance>
       </concept>
 </ccs2012>
\end{CCSXML}

\ccsdesc[500]{Theory of computation~Bio-inspired optimization}

\keywords{Black-box optimization, reversible computation, population-based algorithms}

\maketitle

\section{Introduction}

Optimization is about finding a solution that minimizes (or maximizes) an objective function for given constraints, \textit{i.e.}, possible values that solutions can take. A subset of optimization problems with so called \textit{black-box} objective functions constitute \textit{black-box} optimization. In general, a black-box is any process that when provided an input, returns an output, but its analytical description is unavailable or it is non-differentiable \cite{audet2017derivative}. Examples of black-box functions (and/or constraints) are computer programs \cite{cranmer2019frontier}, physical and biochemical processes \cite{tomczak2019estimating, toni2009approximate}, or evolutionary robotics \cite{doncieux2015evolutionary}. 

There exists a vast of derivative-free optimization (DFO) methods, ranging from classical algorithms like iterative local search or direct search \cite{audet2017derivative} to modern approaches like Bayesian optimization \cite{shahriari2015taking} and evolutionary algorithms (EA) \cite{back1996evolutionary, eiben2003introduction}. Differential evolution (DE) \cite{storn1997differential, price2006differential} is one of the most successful and popular population-based DFO algorithms that utilizes evolutionary operators (mutation and crossover) and a selection mechanism to generate a new set of candidate solutions. DE is a metaheuristic with no convergence guarantees, however, it possesses multiple interesting theoretical properties \cite{opara2019differential}. Since the original publication of DE \cite{storn1997differential}, the method was extended in many ways by, \textit{e.g.}, using adaptive local search \cite{noman2008accelerating}, modifying the differential mutation operator \cite{zhang2009jade} or optimizing parameters of DE \cite{sharma2019deep}. DE has been also applied to many real-life problem, such as, digital filter design \cite{karaboga2005digital}, parameter estimation in ODEs \cite{wang2001kinetic}, discovering predictive genes in microarray data \cite{tasoulis2006differential}, or robot navigation \cite{martinez2018robot}.

In this paper, we follow this line of research and present an extension of DE. One of potential issues with DE is, similarly to other EA variants, that it can suffer from small populations and loose diversity too quickly. Therefore, a potential solution is adjusting or modifying the population size \cite{eiben2004evolutionary}. Here, we propose to enlarge the population \textit{on-the-fly} by generating new candidate solutions using reversible linear transformation applied to a triplet of solutions from the population. As a result, we take a population of $N$ individuals and generate $3N$ new candidates assuming that we can afford running extra evaluations. This procedure allows to enhance DE and explore/exploit the search space better. We evaluate our approach on three testbeds. First, we present results on benchmark function optimization (Griewank, Rastrigin, Schwefel and Salomon functions). Second, we apply the proposed methods to discovering parameter values of the gene repressilator system. Lastly, we utilize the new DE schema in learning neural networks on image data. In all experiments, we show that enlarging the population size indeed allow to faster convergence (in terms of the number of fitness evaluations) and the reversible linear transformations provide efficient and effective alternative to the vanilla differential mutation.

The contribution of the paper is threefold. First, we propose to enhance DE by applying reversible linear transformations with two different linear operators. Second, we analyze the operators by inspecting their eigenvalues. Third, we show empirically on problems with the number of variables ranging from $4$ to $4120$ that the proposed DE with reversible linear transformations significantly outperforms the DE and its extension with three perturbations.

\section{Background}
\subsection{Black-box optimization}
We consider an optimization problem of a function $f: \mathbb{X} \rightarrow \mathbb{R}$, where $\mathbb{X} \subseteq \mathbb{R}^{D}$ is the search space. In this paper we focus on the minimization problem, namely:
\begin{equation}
    \mathbf{x}^{*} = \arg\min_{\mathbf{x} \in \mathbb{X}} f(\mathbf{x}).
\end{equation}
Further, we assume that the analytical form of the function $f$ is unknown or cannot be used to calculate derivatives, however, we can query it through a simulation or experimental measurements. Problems of this sort are known as \textit{black-box optimization problems} \cite{audet2017derivative, jones1998efficient}. Additionally, we consider a bounded search space, \textit{i.e.}, we include inequality constraints for all dimensions in the following form: $l_d \leq x_d \leq u_d$, where $l_d, u_d \in \mathbb{R}$ and $l_d < u_d$, for $d=1, 2, \ldots, D$ .

\subsection{Differential Evolution} 
One of the most widely-used methods for black-box optimization problems is \textit{differential evolution} (DE) \cite{storn1997differential} that requires a population of candidate solutions, $\mathcal{X} = \{\mathbf{x}_{1}, \ldots, \mathbf{x}_{N}\}$, to iteratively generate new query points. A new candidate is generated by randomly picking a triple from the population, $(\mathbf{x}_i, \mathbf{x}_j, \mathbf{x}_k) \in \mathcal{X}$, and then $\mathbf{x}_i$ is perturbed by adding a scaled difference between $\mathbf{x}_j$ and $\mathbf{x}_k$, that is:
\begin{equation}\label{eq:mutation}
    \mathbf{y} = \mathbf{x}_i + F ( \mathbf{x}_j - \mathbf{x}_k ) ,
\end{equation}
where $F \in \mathbb{R}_{+}$ is the scaling factor. This operation could be seen as an adaptive \textit{mutation operator} that is widely known as \textit{differential mutation} \cite{price2006differential}.

Further, the authors of \cite{storn1997differential} proposed to sample a binary mask $\mathbf{m} \in \{0, 1\}^{D}$ according to the Bernoulli distribution with probability $p = P(m_d = 1)$ shared across all $D$ dimensions, and calculate the final candidate according to the following formula:
\begin{equation}\label{eq:crossover}
    \mathbf{v} = \mathbf{m} \odot \mathbf{y} + (1 - \mathbf{m}) \odot \mathbf{x}_{i},
\end{equation}
where $\odot$ denotes the element-wise multiplication. In the evolutionary computation literature this operation is known as \textit{uniform crossover operator} \cite{back1996evolutionary, eiben2003introduction}. In this paper, we fix $p=0.9$ following general recommendations in literature \cite{pedersen2010good} and use the uniform crossover in all methods.

The last component of a population-based method is a selection mechanism. There are multiple variants of selection \cite{back1996evolutionary, eiben2003introduction},  however, here we use the ``survival of the fittest'' approach, \textit{i.e.}, we combine the old population with the new one and select $N$ candidates with highest fitness values (\textit{i.e.}, the deterministic $(\mu + \lambda)$ selection).

This variant of DE is referred to as “DE/\textit{rand}/\textit{1}/\textit{bin}”, where \textit{rand} stands for randomly selecting a base vector, \textit{1} is for adding a single perturbation (a vector difference) and \textit{bin} denotes the uniform crossover. Sometimes it is called \textit{classic DE} \cite{price2006differential}.

\section{Our Approach}
\begin{figure*}[!htp]
    \centering
    \includegraphics[width=1\textwidth]{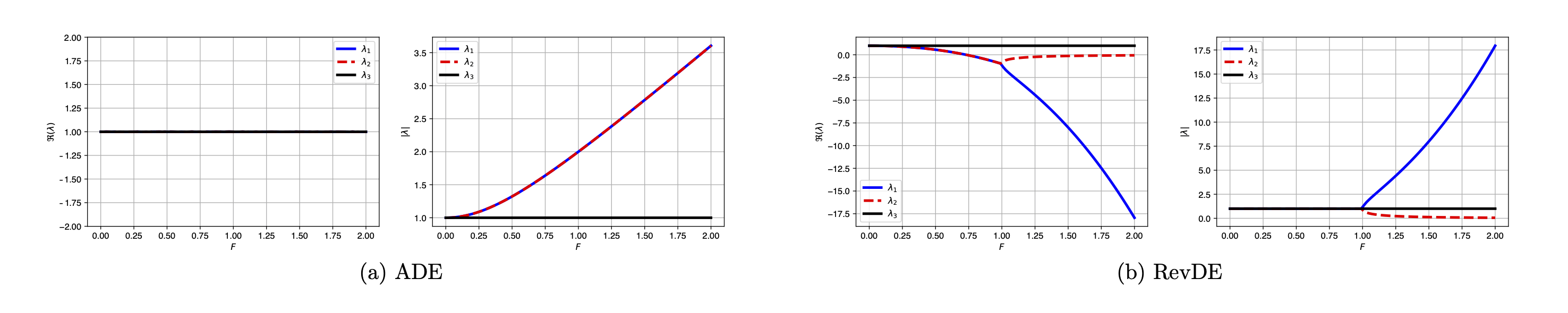}
    \vskip -4mm
    \caption{Real part of eigenvalues, $\Re(\lambda)$, and sbsolute value of eigenvalues, $|\lambda|$, for: (a) $\mathbf{M}$ in ADE, and (b) $\mathbf{R}$ in RevDE.}
    \label{fig:eigenvalues}
\end{figure*}

Generating new candidates in DE requires sampling a triplet of solutions and, basing on these points, one solution is perturbed using the other two solutions. This approach possesses multiple advantages, naming only a few:
\begin{itemize}
    \item[(i)] it is non-parametric, \textit{i.e.}, contrary to evolutionary strategies \cite{back2013contemporary}, no assumption on the underlying distribution of the population is made;
    \item[(ii)] it has been shown to be effective in many benchmark optimization problems and real-life applications \cite{price2006differential}.
\end{itemize}
However, the number of possible perturbations is finite and relies entirely on the population size. Therefore, a small population size could produce insufficient variability of new candidate solutions. To counteract this issue, we propose the following solutions:
\begin{enumerate}
    \item In order to increase variability, we can perturb candidates multiple times by running the differential mutation more than once (\textit{e.g.}, three times).
    \item In fact, we can use the selected triple of points and use it three times to generate new points. In other words, we notice that there is no need to sample three different triplets.
    \item We propose to modify the selected triplet by using generated new solutions \textit{on-the-fly}. This approach allows to enlarge the population size.
\end{enumerate}
In the following subsections, we outline the three approaches. Further, we notice that the second and the third method could be represented as linear transformations. As such, we could analyze them algebraically. 

\subsection{Differential Evolution x3}
In the first approach we generate a larger new population by perturbing the point $\mathbf{x}_i$ using multiple candidate solutions, namely, $\mathbf{x}_j, \mathbf{x}_k, \mathbf{x}_l, \mathbf{x}_m, \mathbf{x}_n, \mathbf{x}_q \in \mathcal{X}$. Then, we can produce $3N$ new candidate solutions instead of $N$ as follows:
\begin{align}\label{eq:dex3}
    \mathbf{y}_1 &= \mathbf{x}_i + F ( \mathbf{x}_j - \mathbf{x}_k ) \\
    \mathbf{y}_2 &= \mathbf{x}_i + F ( \mathbf{x}_l - \mathbf{x}_m ) \\
    \mathbf{y}_3 &= \mathbf{x}_i + F ( \mathbf{x}_n - \mathbf{x}_q ) .
\end{align}
This approach requires sampling more pairs and evaluating more points, however, it allows to better explore the search space. We refer to this approach as \textit{Differential Evolution} $\times 3$, or DEx3 for short.

\subsection{Antisymmetric Differential Evolution}
We first notice that in the DEx3 approach we sample three pairs of points to calculate perturbations. Since we pick them at random, we propose to sample three candidates $\mathbf{x}_i, \mathbf{x}_j, \mathbf{x}_k \in \mathcal{X}$ and calculate perturbations by changing their positions only, that is:
\begin{align}\label{eq:raw_ade}
    \mathbf{y}_1 &= \mathbf{x}_i + F(\mathbf{x}_j - \mathbf{x}_k) \nonumber \\
    \mathbf{y}_2 &= \mathbf{x}_j + F(\mathbf{x}_k - \mathbf{x}_i) \\
    \mathbf{y}_3 &= \mathbf{x}_k + F(\mathbf{x}_i - \mathbf{x}_j) . \nonumber
\end{align}
In other words, we perturb each point by using the remaining two.
Interestingly, we notice that Eq. \ref{eq:raw_ade} corresponds to applying a linear transformation to these three points. For this purpose, we rewrite (\ref{eq:raw_ade}) using matrix notation by introducing matrices $\mathbf{Y} = [\mathbf{y}_1, \mathbf{y}_2, \mathbf{y}_3]^{\top}$ and $\mathbf{X} = [\mathbf{x}_i, \mathbf{x}_j, \mathbf{x}_k]^{\top}$ that yields:
\begin{equation}\label{eq:linear_transformation}
    \mathbf{Y}
     = \mathbf{M}
        \mathbf{X} ,
\end{equation}
where:
\begin{equation}\label{eq:eye_anti}
    \mathbf{M} = \begin{bmatrix}
        1  & F  & -F \\
        -F & 1  & F \\
        F  & -F & 1
        \end{bmatrix} .
\end{equation}
The matrix $\mathbf{M}$ can be further decomposed as follows:
\begin{equation}\label{eq:m_decomposed}
    \mathbf{M} = \underbrace{\begin{bmatrix}
        1  & 0  & 0 \\
        0 & 1  & 0 \\
        0  & 0 & 1
        \end{bmatrix}}_{\mathbf{I}}
        +
        \underbrace{\begin{bmatrix}
        0  & F  & -F \\
        -F & 0  & F \\
        F  & -F & 0
        \end{bmatrix}}_{\mathbf{A}} ,
\end{equation}
where $\mathbf{I}$ denotes the identity matrix, and $\mathbf{A}$ is the antisymmetric matrix.

Comparing Eq. \ref{eq:raw_ade} to DE$\times 3$ we notice that there is no need to sample additional candidates beyond one triplet. Moreover, the new mutation in (\ref{eq:raw_ade}) allows us to analyze the transformation from the algebraic perspective. Additional interesting property following from representing DE using a linear operator corresponds to parallelization of calculations and, thus, it could greatly speed up computations.

We refer to this version of DE as \textit{Antisymmetric Differential Evolution} (ADE), because the linear transformation consists of the identity matrix and the antisymmetric matrix parameterized with the scaling factor $F$.

\subsection{Reversible Differential Evolution}
The linear transformation presented in Eq. \ref{eq:raw_ade} allows to utilize the triplet $(\mathbf{x}_i, \mathbf{x}_j, \mathbf{x}_k)$ to generate three new points, however, it could be still seen as applying DE three times, but in a specific manner (\textit{i.e.}, by defining the linear operator $\mathbf{M}$). A natural question arises whether a different transformation could be proposed that allows \textit{better} exploitation and/or exploration of the search space. The mutation operator in DE perturbs candidates using other individuals in the population. As a result, having too small population could limit exploration of the search space. In order to overcome this issue, we propose to modify ADE by using newly generated candidates \textit{on-the-fly}, that is:
\begin{align}\label{eq:revde_raw}
    \mathbf{y}_1 &= x_i + F(\mathbf{x}_j - \mathbf{x}_k) \nonumber \\
    \mathbf{y}_2 &= x_j + F(\mathbf{x}_k - \mathbf{y}_1) \\
    \mathbf{y}_3 &= x_k + F(\mathbf{y}_1 - \mathbf{y}_2) .\nonumber
\end{align}
Using new candidates $\mathbf{y}_1$ and $\mathbf{y}_2$ allows to calculate perturbations using points outside the population. This approach does not follow a typical construction of an EA where only evaluated candidates are mutated. Further, similarly to ADE, we can express (\ref{eq:revde_raw}) as a linear transformation $\mathbf{Y} = \mathbf{R} \mathbf{X}$ with the following linear operator:
\begin{equation}\label{eq:r}
    \mathbf{R}
     = 
    \begin{bmatrix}
        1       & F              & -F \\
        -F      & 1 - F^2        & F + F^2 \\
        F + F^2 & -F + F^2 + F^3 & 1 - 2 F^2 - F^3
        \end{bmatrix} .
\end{equation}
In order to obtain the matrix $\mathbf{R}$, we need to plug $\mathbf{y}_1$ to the second and third equation in (\ref{eq:revde_raw}), and then $\mathbf{y}_2$ to the last equation in (\ref{eq:revde_raw}).

We refer to this version of DE as \textit{Reversible Differential Evolution} (RevDE), because the linear transformation is reversible (see next subsection).

\subsection{Algebraic properties of ADE and RevDE}
\label{sec:algebraic_properties}

\subsubsection{Reversibility} 
An interesting property of the matrices $\mathbf{M}$ and $\mathbf{R}$ in ADE and RevDE, respectively, is that they are nonsigular matrices (see the Appendix for the proofs). Since they are non-singular, they are also invertible, and, thus, ADE and RevDE use reversible linear transformations.

The reversibility is an important property for formulating Markov Chain Monte Carlo methods (MCMC) \cite{andrieu2003introduction}. Therefore, we could take advantage of the proposed reversible linear transformations and extend the existing work on utilizing DE for sampling methods \cite{ter2006markov}. However, this is beyond the scope of this paper and we leave investigating it in the future work.

\subsubsection{Analysis of eigenvalues}
We can obtain an insight into a linear operator by analysing its eigenvalues that tell us how a matrix transforms an object \cite{strang1993introduction}. Therefore, they play a crucial role in analyzing properties of linear operators, \textit{e.g.}, in control theory real parts of eigenvalues are used to determine stability of linear dynamical systems (if real part of all eigenvalues are lower than $0$, then the system is stable; otherwise it is unstable \cite{bubnicki2005modern}). Further, the absolute value of an eigenvalue $\lambda_i$ determines the influence of the corresponding eigenvector \cite{strang1993introduction}. If the absolute value of the eigenvector is lower than $1$, then the eigenvector is a decaying mode. Similarly, if $|\lambda_{i}| > 1$, then the eigenvector is a dominant mode. In the case of $|\lambda_i| = 1$ we call its eigenvector a steady state.

In Figure \ref{fig:eigenvalues} we present the absolute value and the real part of eigenvalues for $\mathbf{M}$ and $\mathbf{R}$. We notice the following facts:
\begin{itemize}
    \item For ADE, all real parts of eigenvalues are above or equal $1$, and all absolute values of eigenvalues are equal $1$. As a result, the method will never lead to a decaying mode, and as such it will encourage exploration of the search space.
    \item For RevADE, the situation is different, namely, for $F<0.75$ all real parts and all absolute values of eigenvalues are positive, while for $F>0.75$ one eigenvalue has a real part equal $1$ and the other two eigenvalues have real parts lower than $0$. However, in all cases, all absolute values of eigenvalues are larger than 0.\footnote{This fact follows from the non-singularity of the matrix $\mathbf{R}$, \textit{i.e.}, a matrix is non-singular iff all its eigenvalues are non-zero.} In other words, RevDE for some values of $F$ possesses steady states, but for $F>1$ one eigenvalue blows up and leads to the dominant mode, while the other eigenvalue decays to $0$ resulting in a decaying mode.
\end{itemize}

This analysis suggests that, in the case of ADE, taking too large $F$ could result in generating candidate solutions that are dominated by a direction indicated by one of two eigenvectors. Consequently, this could lead to ``jumping'' in the search space. Since ADE is closely related to DE$\times 3$, this result sheds an additional light on the behavior of DE, and seems to confirm that taking $F$ larger than $0.5$ in DE is not a reasonable decision.

In the case of RevDE it seems that taking values of $F$ below $0.75$ is preferable, because then the linear operator will not lead to dominating modes. As a result, a better exploitation/exploration of the search space could be achieved. 



\section{Experiments}
In order to verify our approach empirically, we compare the three proposed methods and the standard DE on three testbeds:
\begin{enumerate}
    \item \textit{Benchmark functions}: selected benchmark function for optimization.
    \item \textit{Gene Repressilator System}: discovering parameter values of a system of ordinary differential equations for given observations.
    \item \textit{Neural Networks Learning}: learning a neural network with one hidden layer on image dataset.
\end{enumerate}

In all experiments, we used the uniform crossover with $p=0.9$ for all methods. The scale parameter $F$ was selected from the following range of values: $\{0.125, 0.25, 0.375, 0.5, 0.6, 0.625, 0.675, 0.75\}$. The population size was set to $500$ across all experiments. We pick the base vector \textit{randomly}.

The code of the methods and all experiments is available under the following link: \url{https://github.com/jmtomczak/reversible-de}.

\begin{figure*}[!htp]
    \centering
    \includegraphics[width=1\textwidth]{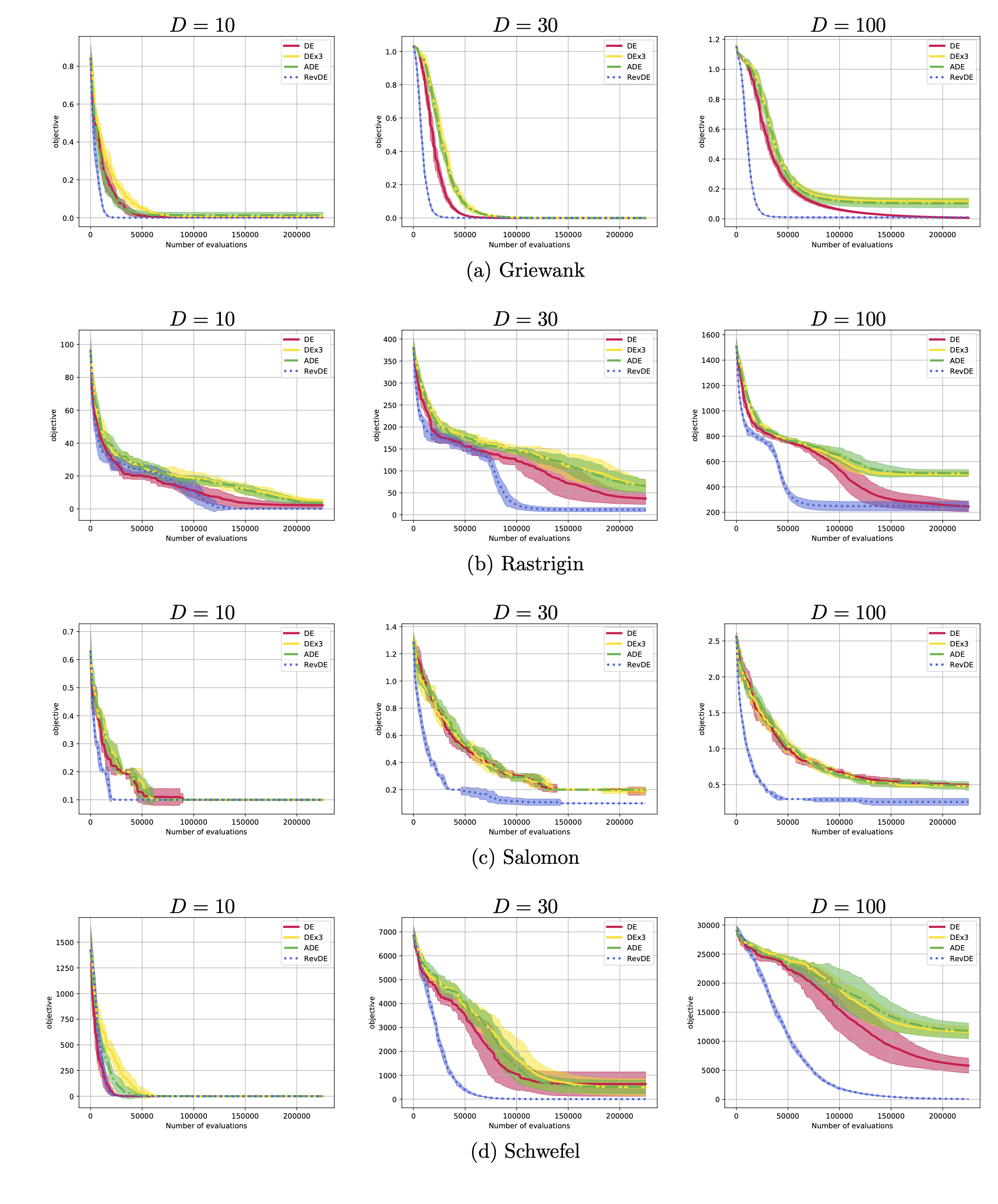}
    \vskip -6mm
    \caption{The results of the best solution found until given evaluation on four benchmark black-box optimization testbeds (a) Griewank function, (b) Rastrigin function, (c) Salomon function, (d) Schwefel function, and three cases: (left column) $10$D case, (middle column) $30$D case, (right column) $100$D case. The solid red lines correspond to DE, the solid yellow lines are for DEx3, the dotted-dashed green lines depict ADE, and the dotted blue lines represent RevDE. In all $12$ test cases an average and one standard deviations over $10$ runs are presented.}
    \label{fig:benchmark_results}
\end{figure*}

\subsection{Benchmark Function Optimization}

\subsubsection{Details} We evaluate the proposed methods on the optimization task of four benchmark functions:
\begin{itemize}
    \item Griewank function \cite{griewank1981generalized}:
    \begin{equation}
        f(\mathbf{x}) = 1 + \sum_{d=1}^{D} \sqrt{x_{d}^2 / 4000} - \prod_{d=1}^{D} \cos \Big{(} \frac{x_{d}}{\sqrt{d}} \Big{)}
    \end{equation}
    with the box constraints: $\forall_{d \in \{1, 2, \ldots, D\}}\ x_{d} \in [-5, 5]$;
    \item Rastrigin function \cite{rastrigin1974systems}:
    \begin{equation}
        f(\mathbf{x}) = 10D + \sum_{d=1}^{D} \Big{(} x_{d}^2 - 10 \cos \big{(} 2 \pi x_{d} \big{)} \Big{)}
    \end{equation}
    with the box constraints: $\forall_{d \in \{1, 2, \ldots, D\}}\ x_{d} \in [-5, 5]$;
    \item Salomon function \cite{salomon1996re}:
    \begin{equation}
        f(\mathbf{x}) = 1 - \cos \Big{(} 2 \pi \sqrt{\sum_{d=1}^{D}x_d^2} \Big{)} + 0.1 \sqrt{\sum_{d=1}^{D}x_d^2}
    \end{equation}
    with the box constraints: $\forall_{d \in \{1, 2, \ldots, D\}}\ x_{d} \in [-5, 5]$;
    \item Schwefel function \cite{schwefel1981numerical}:
    \begin{equation}
        f(\mathbf{x}) = 418.9829 D - \sum_{d=1}^{D} \Big{(} x_d \sin\big{(} \sqrt{|x_{d}|} \big{)} \Big{)}
    \end{equation}
    with the box constraints: $\forall_{d \in \{1, 2, \ldots, D\}}\ x_{d} \in [200, 500]$.
\end{itemize}

We test the methods on these function with different dimensions, namely, $D \in \{10, 30, 100\}$. We run DEx3, ADE and RevDE for $150$ generations. Since DE evaluates three times less candidate solutions, we run it for $450$ generations to match the number of evaluations. However, we want to highlight that DE is more \textit{informed} than other methods due to the propagation of new solutions in consecutive iterations (\textit{i.e.}, applying the selection mechanism $3$ times more). All experiments are repeated $10$.

\subsubsection{Results \& Discussion} The results of the best solution found until given evaluation for this experiment are presented in Figure \ref{fig:benchmark_results}. First, we notice that ADE and DEx3 similarly to the DE. However, in 3 out of 12 cases (\textit{i.e.}, Griewank with $D=100$), Rastrigin with $D=100$) and Schwefel with $D=100$) DE is able to converge to a better solution than ADE and DEx3. Nevertheless, the results are similar and in the next experiments we skip comparing to DE, because it is not completely fair due to the difference in the number of generations.

Interestingly, ADE performs almost identically as DEx3. This result seems to confirm that it is unnecessary to sample multiple (\textit{i.e.}, three) triplets, and utilizing a single triplet to generate new candidates is sufficient.

In all test cases, RevDE achieved the best results in terms of both final objective value and convergence speed. This result is remarkable, because new candidate solutions are generated \textit{on-the-fly} and are used to generate to new points. Moreover, for $D=30$ and $D=100$, \textit{i.e.}, the higher-dimensional cases, RevDE outperformed other methods significantly. These results are especially promising for real-life applications like parameter values discovery of mechanistic models and computer programs \cite{cranmer2019frontier} or learning controllers in (evolutionary) robotics \cite{lan2018directed, lan2020learning}.

In the Rastrigin function with $D=30$ and $D=100$ there is a peculiar behavior of RevDE where around the evaluation number $80000$ and $50000$, respectively, there is a large improvement in terms of the objective value. We hypothesize that the optimizer ``jumps out'' from a local minimum due to large eigenvalue as discussed in Section \ref{sec:algebraic_properties}.

\begin{figure*}[!htp]
    \centering
    \includegraphics[width=1\textwidth]{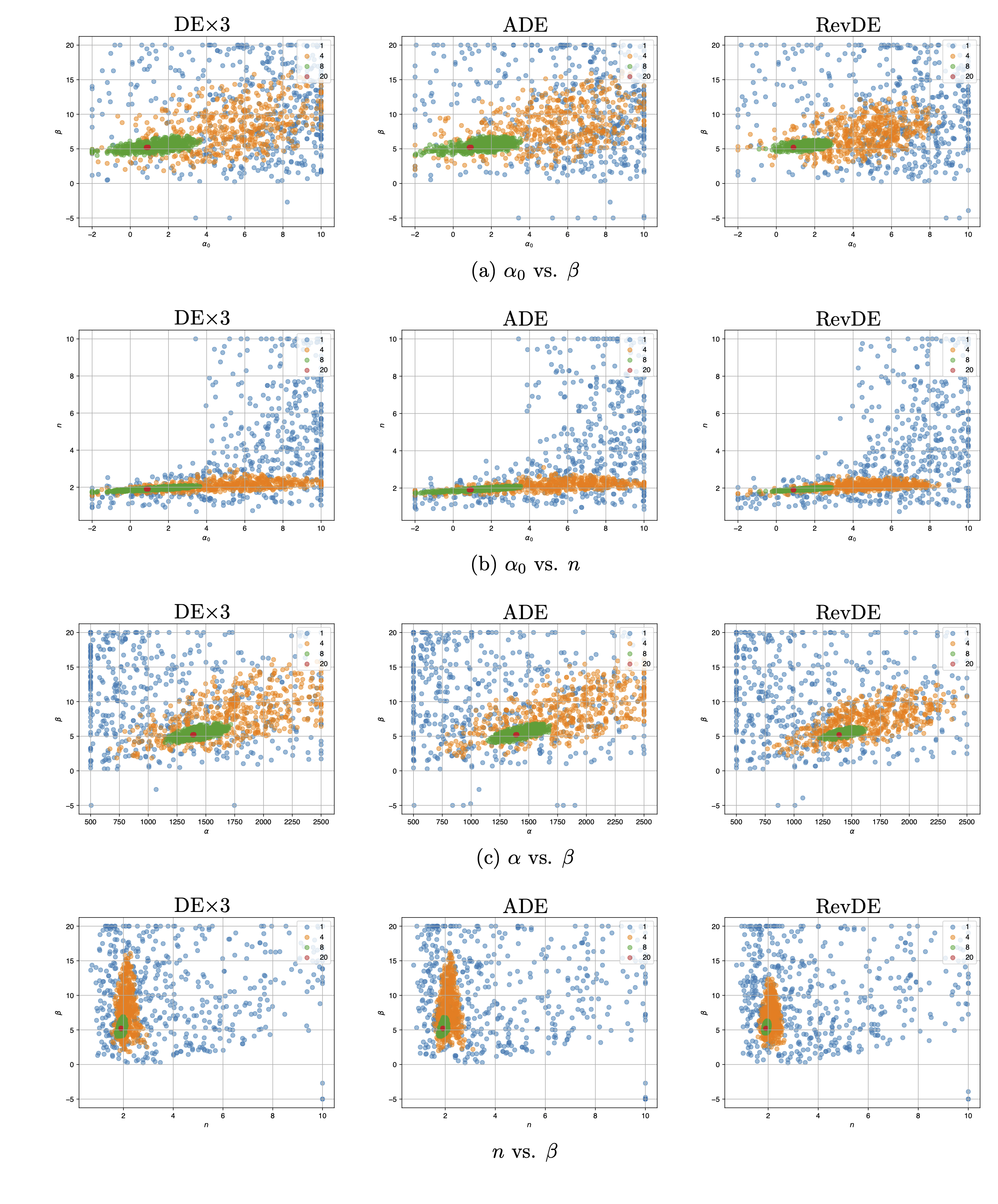}
    \vskip -6mm
    \caption{The discovered parameter values in the repressilator model by: (\textit{left column}) DEx3, (\textit{middle column}) ADE, (\textit{right column}) RevDE. Colors of dots represent the generation number: blue is the 1st generation, orange is the 4th generation, green is the 8th generation, red is the 20th generation. The real parameter values: $(\alpha_0, n, \beta, \alpha) = (1, 2, 5, 1000)$.}
    \label{fig:repressilator_params}
\end{figure*}

\subsection{Gene Repressilator System}

\subsubsection{Details} The gene repressilator system proposed in \cite{elowitz2000synthetic} is a popular model for gene regulatory systems and consists of three genes connected in a feedback loop, where each gene transcribes the repressor protein for the next gene in the loop. The model consists of six ordinary
differential equations that describe dependencies among mRNA $(m_1, m_2, m_3)$ and corresponding proteins $(p_1, p_2, p_3)$, and four parameters $\mathbf{x} = [\alpha_0, n, \beta, \alpha]^{\top}$, which are
as follows:
\begin{align}
    \frac{\mathrm{d} m_1}{\mathrm{d}t} &= -m_1 + \frac{\alpha}{1 + p_3^n} + \alpha_0 \label{eq:repressilator_1}\\
    \frac{\mathrm{d} p_1}{\mathrm{d}t} &= -\beta(p_1 - m_1) \label{eq:repressilator_2}\\
    \frac{\mathrm{d} m_2}{\mathrm{d}t} &= -m_2 + \frac{\alpha}{1 + p_1^n} + \alpha_0 \label{eq:repressilator_3}\\
    \frac{\mathrm{d} p_2}{\mathrm{d}t} &= -\beta(p_2 - m_2) \label{eq:repressilator_4}\\
    \frac{\mathrm{d} m_3}{\mathrm{d}t} &= -m_3 + \frac{\alpha}{1 + p_2^n} + \alpha_0 \label{eq:repressilator_5}\\
    \frac{\mathrm{d} p_3}{\mathrm{d}t} &= -\beta(p_3 - m_3) . \label{eq:repressilator_6}
\end{align}

Further, we assume that only mRNA measurement are measured, while proteins are considered as missing data. The \textbf{goal} of this experiment is to discover the parameters' values for a given observation of mRNA. We transform this problem into the minimization of the following objective:
\begin{equation}\label{eq:repressilator_objective}
    f(\mathbf{x}) = \frac{1}{N} \sum_{n=1}^{N}\sqrt{ \sum_{i=1}^{3} \big{(} m_{i,n} - m_{i,n}(\mathbf{x}) \big{)}^{2}} ,
\end{equation}
where $m_{i,n}(\mathbf{x})$ is given by numerically integrating the system of differential equations in (\ref{eq:repressilator_1}--\ref{eq:repressilator_6}) using a solver, \textit{e.g.}, a Runge-Kutta method\footnote{In this work, we used the explicit Runge-Kutta method of order 5(4) provided by SciPy: \url{https://www.scipy.org/}.}. Notice that the objective function is black-box due to the non-differentiable simulator.

We follow the settings outlined in \cite{toni2009approximate}. The real parameters' values are assumed to be $\mathbf{x}^{*} = [1, 2, 5, 1000]^{\top}$ and we generate real values of $m_i$ by first solving the equations (\ref{eq:repressilator_1}--\ref{eq:repressilator_6}) with $\mathbf{x}^{*}$ and given initial conditions $(m_1, p_1, m_2, p_2, m_3, p_3) = (0, 2, 0, 1, 0, 3)$, and then adding Gaussian noise with the mean equal $0$ and the standard deviation equal $5$.

We run all methods for 20 generations. All experiments were repeated $10$ times. For analyzing final solutions, we look into the convergence of a population from a single run.

\begin{figure}[!htp]
    \centering
    \includegraphics[width=1\columnwidth]{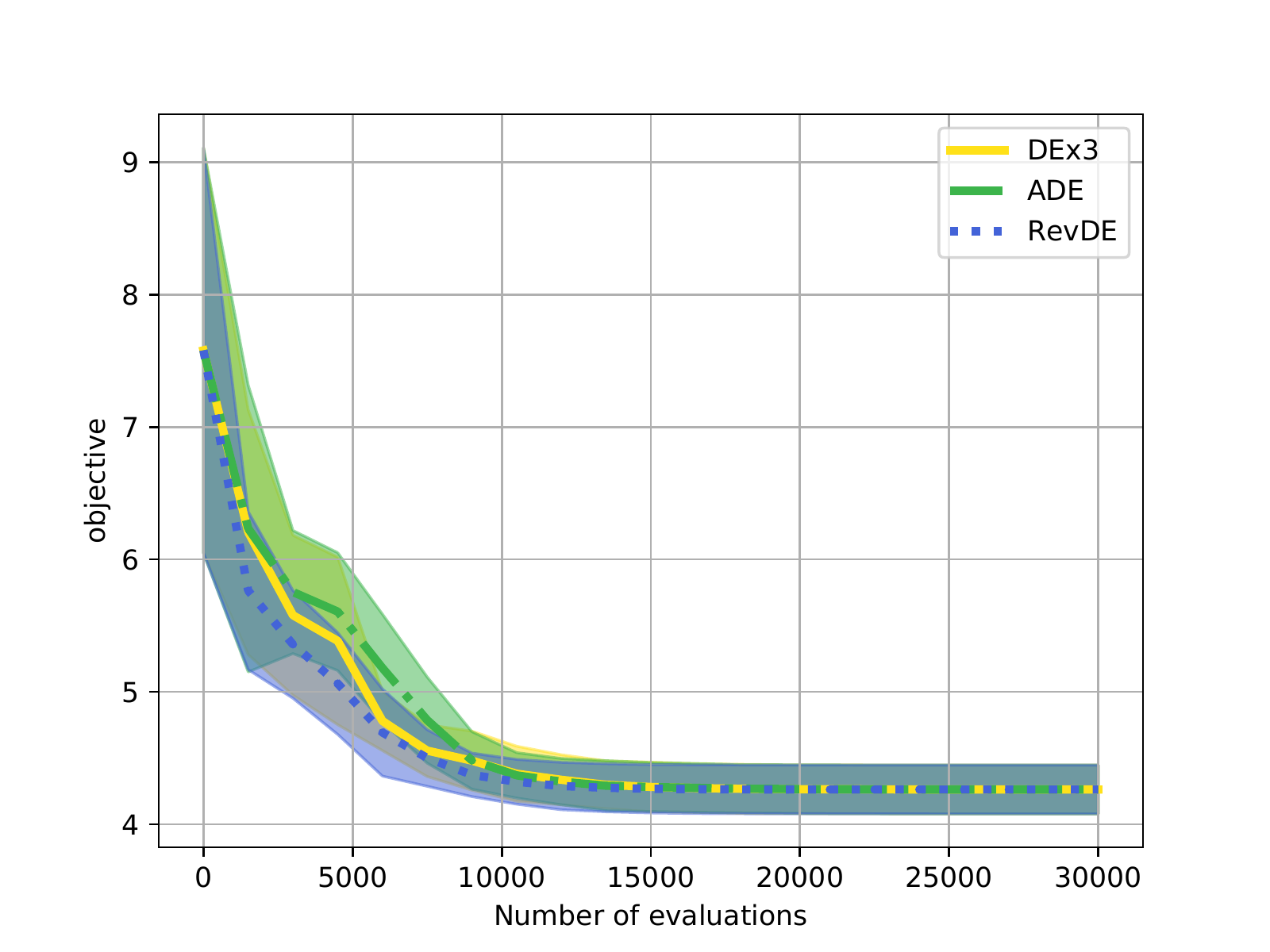}
    \vskip -3mm
    \caption{The results of the best solution found until given fitness evaluation on the repressilator model. The average and the standard deviation over $10$ runs are reported.}
    \label{fig:repressilator_results}
\end{figure}

\subsubsection{Results \& Discussion} We present results of the best solution found until given objective evaluation in Figure \ref{fig:repressilator_results}. Further, we depict converging process of the population in Figure \ref{fig:repressilator_params}. We present only a single run out of ten, however, the behavior is almost indistinguishable between runs.

All methods achieve almost identical objective values (see Figure \ref{fig:repressilator_results}). RevDE seems to converge slightly faster (on average) than other methods, but the difference is not significant.

We can obtain more insight into the performance of the methods by analysing behavior of the population over generations (see Figure \ref{fig:repressilator_params}). First we notice that all methods converge to a single point within $20$ generations. Second, it seems that ADE and DEx3 behave almost indistinguishably. Comparing their scatterplots it is almost impossible to spot a difference. However, RevDE converges faster, because already in the $4$th generation the solutions are less scattered than in the case of DEx3 and ADE. In other words, comparing how points are distributed in the $4$th and the $8$th generation, it is apparent that the variance for the populations found by RevDE is smaller than for ADE ans DEx3. 

Eventually, we would like to comment on discovering the parameter values. In the case of ADE, DEx3 and RevDE all candidates converge to the same solution that are roughly around the point $\mathbf{x} = [1, 2, 5, 1380]^{\top}$. By comparing the real values and the discovered ones we see that the only mismatch is for $\alpha$. However, the discrepancy ($1000$ vs. $1380$) possibly follows from the fact that the observed data is noisy, because the population in the $4$th epoch covered values around $1000$ well. Similarly, in \cite{toni2009approximate} the Approximate Bayesian Computation with the Sequential Monte Carlo method (ABC-SMC) also obtained values between around $800$ and $1300$ (see Figure 4(c) in \cite{toni2009approximate}). We conclude that RevDE seems to be a very promising alternative to Monte Carlo techniques for finding parameters in simulator-based inference problems \cite{cranmer2019frontier}.

\subsection{Neural Networks Learning}

\subsubsection{Details} In the last experiment we aim at evaluating our approach on a high-dimensional optimization problem. For this purpose, we train a neural network with a single fully-connected hidden layer on the image dataset of ten handwritten digits (MNIST \cite{lecun1998gradient}). We resize original images from $28\text{px} \times 28\text{px}$ to $14\text{px} \times 14\text{px}$, and use $20$ hidden units. As a result, we obtain the total number of weights equal $4120$ (\textit{i.e.}, $\mathbb{X} = \mathbb{R}^{4120}$). We use the ReLU non-linear activation function for hidden units and the softmax activation function for outputs. The objective function is the classification error:
\begin{equation}\label{eq:classification_error}
    f(\mathbf{x}) = 1 - \frac{1}{N} \sum_{n=1}^{N} \mathbb{I}[y_{n} = y_{n}(\mathbf{x})] ,
\end{equation}
where $N$ denotes the number of images, $\mathbb{I}[\cdot]$ is an indicator function, $y_n$ is the true label of the $n$th image, and $y_{n}(\mathbf{x})$ is the label for the $n$t image predicted by a neural network with weights $\mathbf{x}$. The prediction of the neural network is a class label with highest value given by the softmax output.

The original dataset consists of $60000$ pairs of images and labels for training, and $10000$ pairs of images and labels for training. In our experiments, we use only $2000$ training points, but all $10000$ testing points. All models are trained for $500$ epochs (generations) and the experiments are repeated $3$ times. For testing, we take a candidate solution from the final population with the lowest training classification error.

The objective function in Eq. \ref{eq:classification_error} is non-differentiable, and, thus, could be treated as a black-box objective. However, we want to highlight that this experiment does not aim at proposing DE as an alternative training procedure to a gradient-based methods, because the log-likelihood function is a good proxy to the objective in (\ref{eq:classification_error}). In fact, is has been shown in multiple papers that the (stochastic) gradient descent optimizer is extremely effective in learning neural networks and DE is not competitive with it at all \cite{ilonen2003differential}. We rather use the neural network learning problem as an interesting showcase of a high-dimensional optimization problem. 

\begin{table}[!htp]
    \centering
    \caption{Test results on MNIST. The average with the standard errors over $3$ runs are reported.}
    \vskip -0mm
    \begin{tabular}{c|c}
        \textbf{Method} & \textbf{Classification error} \\
        \hline
        DE$\times 3$    & 20.1 $\pm$ 1.4 \\
        ADE             & 18.1 $\pm$ 0.2 \\
        RevDE           & 18.5 $\pm$ 0.8 
    \end{tabular}
    \label{tab:mnist_test}
\end{table}

\subsubsection{Results \& Discussion} In Figure \ref{fig:mnist_training} we present learning curves for neural networks trained with different methods. Additionally, in Table \ref{tab:mnist_test} we gather test classification errors.

First of all, we notice that the training is not fully converged and possibly better results could have been achieved. Nevertheless, our goal is to present performance of our methods on a high-dimensional problem rather than reaching state-of-the art scores. That being said, we first observe that the proposed extensions of DE shared similar learning curves. ADE performed the best during training, and RevDE converged to a better point than DEx3 in the very end. However, the final test performance was better for ADE and RevDE than DEx3. This result could be possibly explained by the fact that DEx3 is more stochastic than the other two methods that could be harmful in the highly dimensional problem.

A close inspection of the results in Table \ref{tab:mnist_test} suggests that ADE and RevDE perform on par, and they seem to be better than DEx3. This result is potentially interesting because the negative message delivered in \cite{ilonen2003differential} that DE is definitely worse than the gradient-based learning method is not necessarily true and more research in this direction is required. Especially in the context of adding non-differentiable components (regularizers) to the learning objective.

\begin{figure}[!htp]
    \centering
    \includegraphics[width=1\columnwidth]{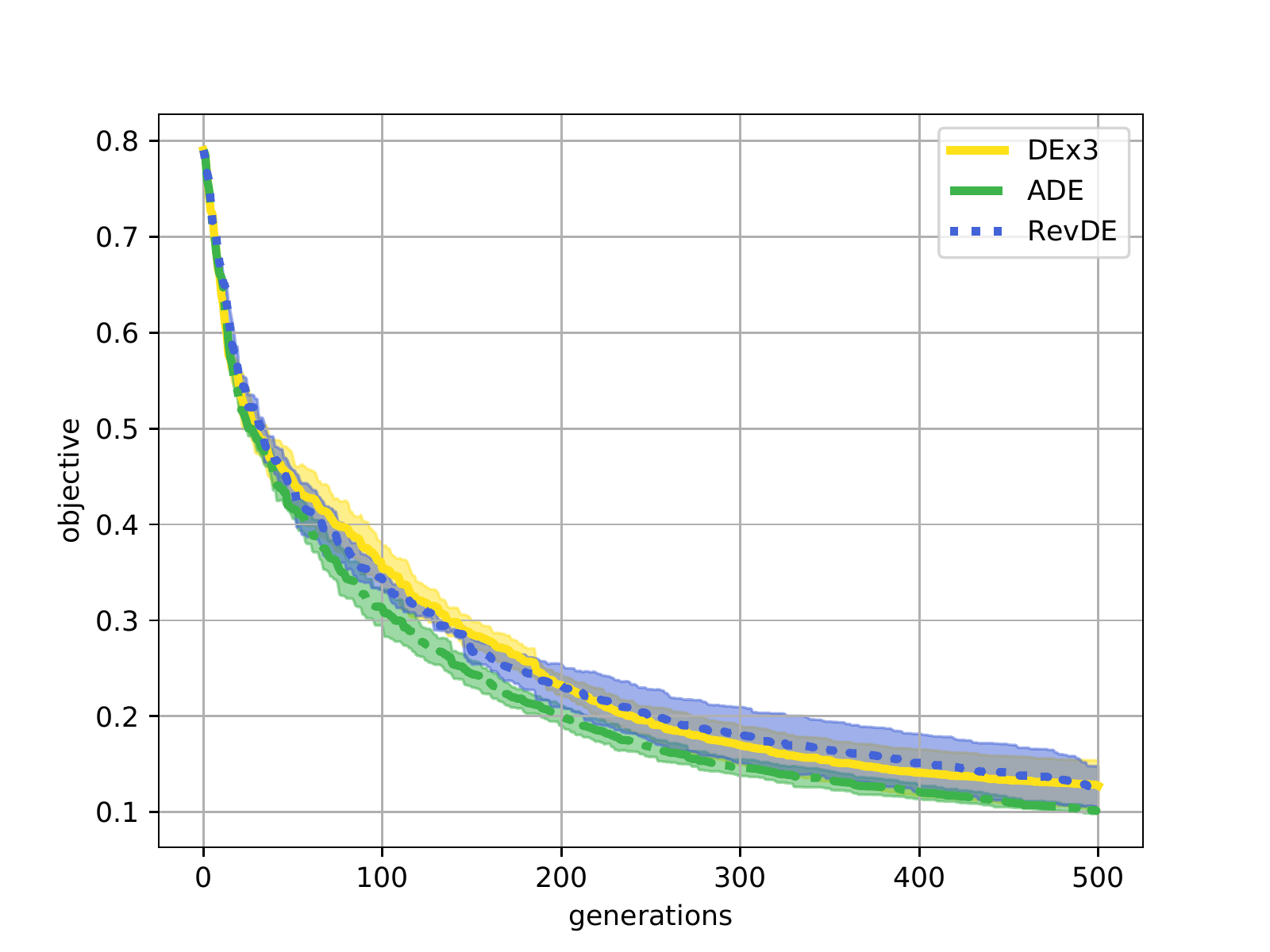}
    \vskip -2mm
    \caption{Training curves on MNIST. The average and the standard deviation over $3$ runs are reported.}
    \label{fig:mnist_training}
\end{figure}

\section{Conclusion}
In this paper, we note that insufficient variability of the population could cause DE to loose diversity too quickly. 
In order to counteract this issue, we propose three extensions of DE: (i) DE with multiple samples of candidates for calculating perturbations, (ii) DE with the reversible linear transformation using a sum of the identity matrix and an anti-symmetric matrix, (iii) DE with the reversible linear transformation utilizing newly generated yet not evaluated candidates. 

We provide a theoretical analysis of the proposed linear operators by proving their reversibility, and inspecting their eigenvalues.
Further, we show empirically on three testbeds (benchmark function optimization, discovering parameter values of the gene repressilator systems, and learning neural networks) that producing new candidates \textit{on-the-fly} allows to obtain better results in fewer number of evaluations compared to DE.

We believe that this work opens new possible research directions:
\begin{itemize}
    \item Representing the differential mutation as a linear transformation allows to look into other forms of linear operators.
    \item The linear operators defined in this paper are parameterized with a single parameter. A natural extension would be considering different parameterization.
    \item Here, we present an analysis based on eigenvalues. However, we can consider the reversible transformation as a dynamical system (\textit{e.g.}, an extension of the analysis outlined in \cite{opara2019differential}).
    \item We can take advantage of the reversibility of the proposed linear transformations. For instance, reversibility is an important property of transition operators in MCMC methods \cite{andrieu2003introduction}. A modification of the vanilla DE for formulating a proper MCMC method was already presented in \cite{ter2006markov} and an interesting direction would be to extend this work using DE with the reversible linear transformations.
\end{itemize}

\section*{Appendix}
\subsection*{Non-singularity of $\mathbf{M}$ and $\mathbf{R}$}

\begin{proposition}\label{prop:ade}
The matrix $\mathbf{M}$ defined in Eq. \ref{eq:eye_anti} is non-singular.
\end{proposition}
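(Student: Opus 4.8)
The plan is to prove non-singularity by exhibiting a nonzero determinant, which is the most direct route for a fixed $3 \times 3$ matrix. First I would expand $\det(\mathbf{M})$ along the first row using cofactors. Keeping careful track of the signs in the three $2 \times 2$ minors, the contributions collapse to a low-degree polynomial in $F$, and I expect the $F^3$ cross terms to cancel, leaving
\begin{equation}
    \det(\mathbf{M}) = 1 + 3F^2 .
\end{equation}
Since $F \in \mathbb{R}_{+}$, this quantity satisfies $1 + 3F^2 \geq 1 > 0$ for every admissible scaling factor (indeed for every real $F$), so the determinant never vanishes and $\mathbf{M}$ is non-singular.

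A more conceptual alternative, which dovetails with the decomposition in Eq. \ref{eq:m_decomposed}, would be to argue via the eigenvalues of $\mathbf{M} = \mathbf{I} + \mathbf{A}$. Here I would invoke the standard fact that a real antisymmetric matrix $\mathbf{A}$ has only purely imaginary or zero eigenvalues. Consequently every eigenvalue of $\mathbf{M}$ has the form $1 + i\mu$ for some real $\mu$, hence real part exactly $1$ and in particular nonzero. Because a matrix is non-singular if and only if all its eigenvalues are nonzero, this again yields the claim, and it simultaneously anticipates the eigenvalue discussion in Section \ref{sec:algebraic_properties}, where all eigenvalues of $\mathbf{M}$ are reported to have absolute value $1$.

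The main obstacle here is bookkeeping rather than any genuine difficulty: the determinant expansion is routine, and the only place to slip is mismanaging the signs of the off-diagonal entries $\pm F$ when forming the minors. In the eigenvalue route the single nontrivial ingredient is the spectral property of real antisymmetric matrices, which I would cite as a known result rather than reprove. Either argument is short, and I would likely present the determinant computation since it is elementary and self-contained, while remarking that the eigenvalue argument extends verbatim to explain why $\mathbf{M}$ exhibits no decaying modes.
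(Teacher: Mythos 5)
Your primary argument is exactly the paper's proof: compute $\det(\mathbf{M}) = 1 + 3F^2$ directly and observe it never vanishes. The alternative eigenvalue argument via $\mathbf{M} = \mathbf{I} + \mathbf{A}$ with $\mathbf{A}$ antisymmetric is also correct and connects nicely to the eigenvalue analysis in Section \ref{sec:algebraic_properties}, but since you would present the determinant computation, the proof is essentially identical to the one in the Appendix.
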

\begin{proof}
Since the matrix $\mathbf{M}$ is a small 3-on-3 matrix, we can calculate its determinant analytically that gives:
\begin{equation}\label{eq:m_det}
    \mathrm{det}(\mathbf{M}) = 1 + 3 F^2 .
\end{equation}
For any value of $F$ we have $\mathrm{det}(\mathbf{M}) \neq 0$, therefore, the matrix $\mathbf{M}$ is non-singular
\end{proof}

\begin{proposition}\label{prop:revde}
The matrix $\mathbf{R}$ defined in Eq. \ref{eq:r} is non-singular.
\end{proposition}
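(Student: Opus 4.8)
The plan is to follow the template of Proposition~\ref{prop:ade} and establish non-singularity by showing that $\det(\mathbf{R})$ never vanishes. The most direct route is to expand the $3 \times 3$ determinant by cofactors along the first row, exactly as was done for $\mathbf{M}$. This is feasible, but the entries of $\mathbf{R}$ are polynomials in $F$ of degree up to three, so the expansion generates a degree-five polynomial in $F$ whose many terms must then be collected; the only danger is bookkeeping (sign and exponent slips in a long chain of cancellations), and one would still have to argue that the resulting polynomial has no real root.

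I would instead avoid the algebra by exploiting the sequential, \emph{on-the-fly} structure of Eq.~\ref{eq:revde_raw}. The point is that $\mathbf{R}$ is built by computing $\mathbf{y}_1$, then substituting it to obtain $\mathbf{y}_2$, then substituting both to obtain $\mathbf{y}_3$ --- that is, by overwriting one coordinate at a time while leaving the other two untouched. Each such step is a unit-triangular shear, so $\mathbf{R}$ factors as
\begin{equation}
    \mathbf{R} = \mathbf{L}_3 \mathbf{L}_2 \mathbf{L}_1 ,
\end{equation}
with
\begin{align*}
    \mathbf{L}_1 &= \begin{bmatrix} 1 & F & -F \\ 0 & 1 & 0 \\ 0 & 0 & 1 \end{bmatrix}, &
    \mathbf{L}_2 &= \begin{bmatrix} 1 & 0 & 0 \\ -F & 1 & F \\ 0 & 0 & 1 \end{bmatrix}, \\
    \mathbf{L}_3 &= \begin{bmatrix} 1 & 0 & 0 \\ 0 & 1 & 0 \\ F & -F & 1 \end{bmatrix} ,
\end{align*}
where $\mathbf{L}_1$ encodes the update producing $\mathbf{y}_1$, $\mathbf{L}_2$ the update producing $\mathbf{y}_2$ from $\mathbf{y}_1$, and $\mathbf{L}_3$ the update producing $\mathbf{y}_3$ from $\mathbf{y}_1$ and $\mathbf{y}_2$.

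Once the factorization is in hand, the conclusion is immediate: each $\mathbf{L}_i$ is triangular with unit diagonal, hence $\det(\mathbf{L}_i) = 1$, and multiplicativity of the determinant gives $\det(\mathbf{R}) = 1$ for \emph{every} $F$. In particular $\det(\mathbf{R}) \neq 0$, so $\mathbf{R}$ is non-singular. This route has the further advantage of making the reversibility claimed in Section~\ref{sec:algebraic_properties} explicit, since $\mathbf{R}^{-1} = \mathbf{L}_1^{-1} \mathbf{L}_2^{-1} \mathbf{L}_3^{-1}$ and each shear is inverted merely by negating its off-diagonal entries. The main obstacle is therefore not conceptual but verificational: I must confirm that the product $\mathbf{L}_3 \mathbf{L}_2 \mathbf{L}_1$ really reproduces the entries of $\mathbf{R}$ in Eq.~\ref{eq:r}. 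This reduces to one short matrix multiplication (forming $\mathbf{L}_2 \mathbf{L}_1$ first, then left-multiplying by $\mathbf{L}_3$) and matching the result against Eq.~\ref{eq:r} row by row, which involves far fewer cancellations than the direct determinant expansion that remains available as a fallback.
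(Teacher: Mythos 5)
Your proof is correct, and it takes a genuinely different route from the paper's. The paper simply computes $\det(\mathbf{R})$ analytically (as a brute-force $3\times 3$ expansion) and reports that it equals $1$; you instead derive $\det(\mathbf{R})=1$ structurally, by factoring $\mathbf{R}=\mathbf{L}_3\mathbf{L}_2\mathbf{L}_1$ into the three elementary row updates that define Eq.~\ref{eq:revde_raw}, and your factorization does check out against Eq.~\ref{eq:r} entry by entry. What your approach buys is an \emph{explanation} of why the determinant is exactly $1$ for every $F$ (each update overwrites one row using only the others, hence is volume-preserving), plus an explicit formula for $\mathbf{R}^{-1}$, which makes the reversibility discussed in Section~\ref{sec:algebraic_properties} concrete rather than merely asserted. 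One small imprecision: $\mathbf{L}_2$ is not literally triangular, since it has nonzero off-diagonal entries at positions $(2,1)$ and $(2,3)$, i.e., on both sides of the diagonal. It is, however, an elementary shear of the form $\mathbf{I}+\mathbf{e}_2\mathbf{v}^{\top}$ with $v_2=0$, so $\det(\mathbf{L}_2)=1+\mathbf{v}^{\top}\mathbf{e}_2=1$ and $\mathbf{L}_2^{-1}=\mathbf{I}-\mathbf{e}_2\mathbf{v}^{\top}$ (negating the off-diagonal entries, as you say); the same remark applies to $\mathbf{L}_1$ and $\mathbf{L}_3$, which happen also to be triangular. With that wording fixed, the argument is complete and arguably more informative than the paper's.
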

\begin{proof}
The matrix $\mathbf{R}$ is a small 3-on-3 matrix, thus, we can calculate its determinant analytically, that gives:
\begin{equation}\label{eq:r_det}
    \mathrm{det}(\mathbf{R}) = 1.
\end{equation}
Since the determinant is always $1$, then $\mathbf{R}$ is non-singular.
\end{proof}

\begin{acks}
EW-T is financed by a grant within Mobilnosc Plus V from the Polish Ministry of Science and Higher Education (Grant No. 1639/MOB/V/2017/0).
\end{acks}

\bibliographystyle{ACM-Reference-Format}
\bibliography{main.bib}

\end{document}